\definecolor{cvprblue}{rgb}{0.21,0.49,0.74}
\newtheoremstyle{mytheoremstyle} 
    {\topsep}                    
    {\topsep}                    
    {}                   
    {}                           
    {\bfseries}                   
    {}                          
    {.5em}                       
    {}  
\theoremstyle{mytheoremstyle}
\newtheorem{theorem}{Theorem}[section]
\newtheorem{lemma}[theorem]{Lemma}
\newtheorem{assumption}[theorem]{Assumption}
\title{Stop learning it all to mitigate visual hallucination, \\
Focus on the hallucination target.}
\author{Dokyoon Yoon, \qquad Youngsook Song, \qquad Woomyong Park\textsuperscript{*}\\
SIONIC AI\\
{\tt\small dokyoon, song, max@sionic.ai}
}
\newcommand{\mymethod}{TL-DPO}
\numberwithin{equation}{section}
\begin{document}
\maketitle
\begin{abstract}
Multimodal Large Language Models (MLLMs) frequently suffer from hallucination issues, generating information about objects that are not present in input images during vision-language tasks. These hallucinations particularly undermine model reliability in practical applications requiring accurate object identification. To address this challenge, we propose \mymethod,\ a preference learning approach that mitigates hallucinations by focusing on targeted areas where they occur. To implement this, we build a dataset containing hallucinated responses, correct responses, and target information (i.e., objects present in the images and the corresponding chunk positions in responses affected by hallucinations). By applying a preference learning method restricted to these specific targets, the model can filter out irrelevant signals and focus on correcting hallucinations. This allows the model to produce more factual responses by concentrating solely on relevant information. Experimental results demonstrate that \mymethod\ effectively reduces hallucinations across multiple vision hallucination tasks, improving the reliability and performance of MLLMs without diminishing overall performance.
\end{abstract}    

\renewcommand{\thefootnote}{}
\footnotetext{\textsuperscript{*}Corresponding author}
\renewcommand{\thefootnote}{\arabic{footnote}} 

\section{Introduction}
\label{sec:intro}

\begin{figure}[t]
  \centering
   \includegraphics[width=0.8\linewidth]{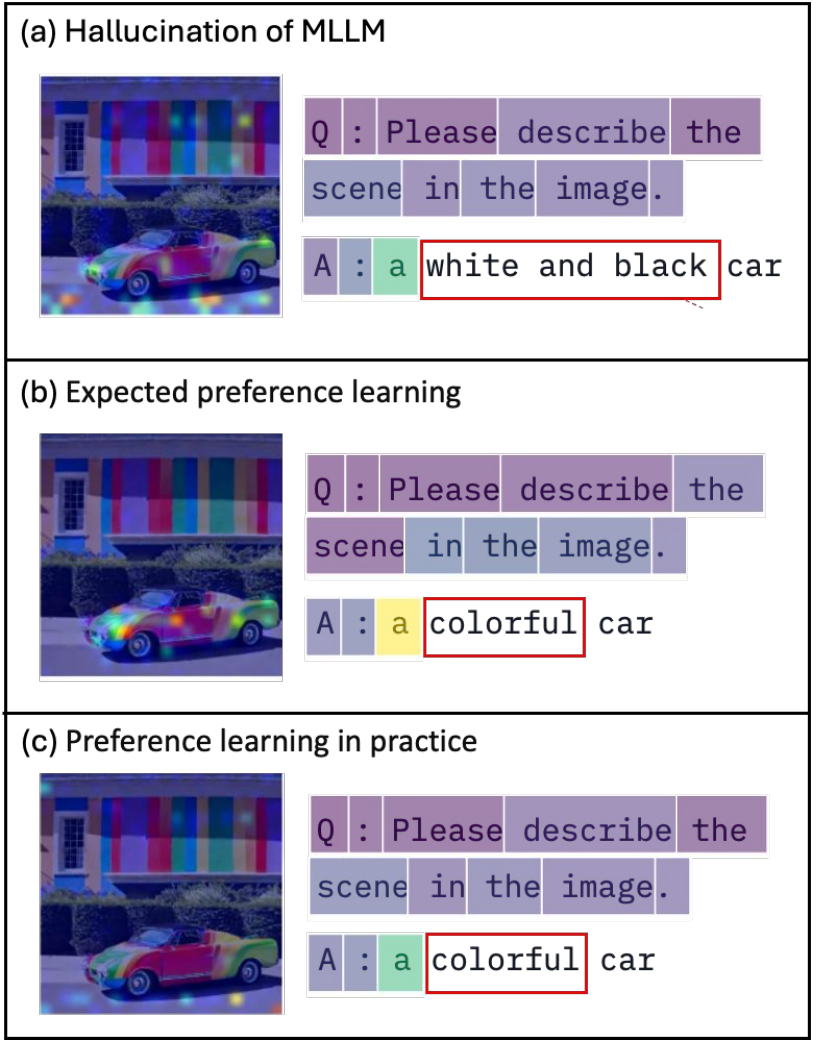}
   \caption{\textbf{Visualization of attention maps for hallucination mitigation} shows how preference learning influences model's attention. (a) illustrates hallucinations caused by focusing on incorrect objects or missing relevant ones. Ideally, as in (b), preference learning would guide attention to the correct objects, but in reality, there is a risk of overfitting to text rather than object-based information, as shown in (c). Additional analysis is in Appendix A.
    }
   \label{fig:attention}
\end{figure}

Multimodal Large Language Models (MLLMs) have achieved remarkable performance in multimodal tasks by leveraging extensive training and fine-tuning on large-scale image-text pairs, which has significantly enhanced their contextual understanding and instruction-following abilities \cite{instructblip,llava,gpt4o, docowl}. However, during the integration of text and vision modalities, a new type of hallucination issue has emerged, where the model generates object information that does not exist in the images or creates inaccurate spatial representations \cite{mllm-ha-survey, preventinghallucinations, evaluatingobjecthallucination, mitigatinghallucination, hallecontrol}.

To address these issues, recent studies have introduced preference-based learning methods, most notably Reinforcement Learning from Human Feedback (RLHF), to improve alignment across modalities and correct hallucinations \cite{rlhfv, rlaifv, mdpo, multilevelpreference, dpa}. However, the problem of object-level hallucinations within images remains unresolved \cite{objecthalluci, hallecontrol}. Although preference-based learning has contributed to reducing distribution gaps across modalities and enhancing factuality, its global feature-centric learning poses limitations in addressing object-level hallucinations. 

\begin{figure*}
    \centering
    \includegraphics[width=0.9\linewidth]{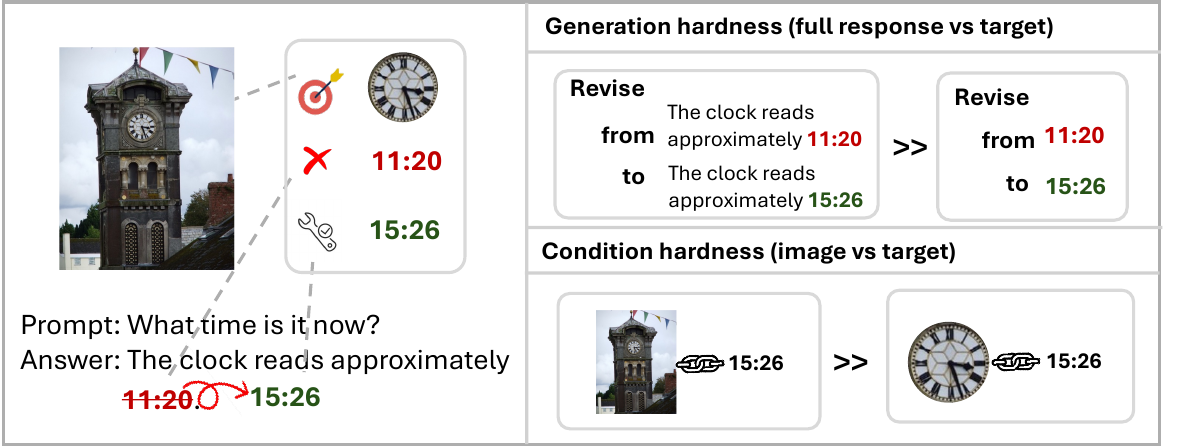}
    \caption{Target learning paradigm. (Left) Correcting hallucinations in image-based QA requires identifying the hallucinated object, the hallucination, and the correct (e.g., identifying incorrect and correct times for a clock-related hallucination). (Right) Training on specific hallucinated parts and their corrections is easier than learning from whole responses, and focusing on relevant image regions simplifies learning compared to using the entire image. 
    }
    \label{fig:overview}
\end{figure*}

While humans tend to correct only specific areas when they misdraw a region in an illustration or use an incorrect word in writing, previous approaches to addressing hallucination issues in the literature have often suggested ``redrawing everything." In this study, we examine the necessity of directly addressing the image hallucination problem, as humans would, by investigating whether existing preference-based learning aligns with our expectations for hallucination correction. As shown in \ref{fig:attention}, existing preference learning methods have often learned signals unrelated to hallucinations (e.g., irrelevant chunks or regions in the image), leading to unintended side effects. 

Based on these observations, we hypothesize that the reason for the lack of effective hallucination correction in preference learning is due to learning from irrelevant signals. Consequently, we propose a novel approach, termed “target learning”, which focuses exclusively on the image regions where hallucinations occur and the specific chunks in the responses that contain hallucinations, separating these as the primary targets for learning. Additionally, we provide a logical explanation showing how target learning enhances both the efficiency and effectiveness of the learning process.

To correct hallucinations in MLLMs, this study introduces \textbf{\mymethod} (\textbf{T}arget-\textbf{L}earning \textbf{D}irect \textbf{P}reference \textbf{O}ptimization), a learning method that identifies image regions and response chunks associated with hallucinations as targets, while categorizing unrelated parts as irrelevant signals, thereby focusing solely on the target elements during training. In \mymethod, a new loss function is employed to focus on targets, enabling selective learning by distinguishing them from irrelevant signals. This approach not only guides multimodal models to generate outputs without hallucinations but also mitigates other incorrect behaviors.

To sum up, our contributions are  as follows:
\begin{enumerate}

\item We identify the learning of irrelevant signals as the primary reason for the failure of conventional preference learning in correcting hallucinations. To address this, we propose \textbf{target learning}, which explicitly excludes irrelevant signals, and provide a solid theoretical foundation for this approach.

\item We introduce \mymethod \ a novel method for implementing Target Learning. To support this method, we expand the existing dataset by constructing a preference-based image dataset that includes the target 

\item We validate the effectiveness of the proposed method through experiments on various benchmark models, demonstrating reduced hallucinations.

\end{enumerate}
\section{Preliminaries}

In language generation, a language model (LM) generates a response $y$ based on a prompt containing a question $x$, where both $x$ and $y$ consist of sequences of tokens. Direct Preference Optimization (DPO) \cite{dpo} builds upon the reinforcement learning (RL) objective used in RLHF (Reinforcement Learning from Human Feedback) \cite{rlhf}.

\begin{equation}
\begin{aligned}
\max_{\pi_{\theta}} \mathbb{E}_{x \sim D, y \sim \pi_{\theta}(\cdot \mid x)} [ &r(x, y) \\ - \beta D_{\text{KL}} ( &\pi_{\theta}(\cdot \mid x) | \pi_{\text{ref}}(\cdot \mid x) )  ],
\label{eq:1}
\end{aligned}
\end{equation}
where $D$ represents the human preference dataset, $r(x, y)$ denotes the reward function, and $\pi_{\text{ref}}(\cdot \mid x)$ is the reference model, typically chosen as the language model after supervised fine-tuning. Here, $\pi_{\theta}$ denotes the model being adjusted through reinforcement learning. This objective function incorporates Kullback-Leibler (KL) divergence constraints at the token level, improving the regulation of KL divergence.

DPO derives a connection between the reward model and the optimal policy by applying reverse KL divergence, allowing the reward function to be expressed in relation to the model's policy. This is captured as:

\begin{equation}
r(x, y) = \beta \log \frac{\pi_{\theta}(y \mid x)}{\pi_{\text{ref}}(y \mid x)} + \beta \log Z(x),
\label{eq:BT}
\end{equation}
where $Z(x)$ is a normalization term called the partition function. To effectively capture human preferences, DPO uses the Bradley-Terry model to compare pairs of possible responses, assigning a probability to each preferred response $y_1$ over another response $y_2$ given the prompt $x$:

\begin{equation}
P_{\text{BT}}(y_1 \succ y_2 \mid x) = \frac{\exp(r(x, y_1))}{\exp(r(x, y_1)) + \exp(r(x, y_2))}.
\end{equation}

This formulation is refined by substituting the reward expression into the comparison model, leading to a loss function based on the negative log-likelihood:

\begin{equation}
\mathcal L_{\text{DPO}}(\pi_{\theta}; \pi_{\text{ref}}) = -\mathbb{E}_{(x, y_w, y_l) \sim D} [\log \sigma (u(x, y_w, y_l))],
\end{equation}
where $\sigma(\cdot)$ is the sigmoid function and $u(x, y_w, y_l)$ quantifies the relative preference between the more favorable response $y_w$ and the less favorable response $y_l$. Specifically, $u(x, y_w, y_l)$ is defined as:

\begin{equation}
u(x, y_w, y_l) = r(x, y_w) - r(x, y_l).
\label{eq:u}
\end{equation}

The gradient of this loss, critical for model updates, is:
\begin{equation}
\begin{array}{l}
\nabla_{\theta} \mathcal L_{\text{DPO}}(\pi_{\theta}; \pi_{\text{ref}}) = \\ 
 -\mathbb{E}_{(x, y_w, y_l) \sim D} \left[\sigma (-u(x, y_w, y_l)) \nabla_{\theta} u \right],
\end{array}
\end{equation}
where $u$ acts as a shorthand for $u(x, y_w, y_l)$. This formulation enables DPO to iteratively refine the policy by focusing on minimizing the divergence while aligning with human preference through straightforward pairwise comparison, enhancing the overall quality of the model's responses without complex reward model sampling.





\section{Target Learning Problem}
\label{sec:problem}

Although conventional preference learning has been effective in improving the overall output quality of models, as shown in \ref{fig:overview}, it may also result in learning elements that are not designated as targets in preference learning. This is due to a limitation, as represented in \ref{fig:attention}, where the optimization process prioritizes global features, often overlooking chunks where hallucinations occur in responses or localized object information that has caused hallucinations.

In \ref{sec:target learning}, we redefine conventional preference learning as a problem focused solely on target learning. We formalize this as a target learning approach and logically demonstrate in \ref{sec:bt-model} that the target learning approach achieves greater efficiency than conventional preference learning by excluding irrelevant signals.

This approach rejects hallucinated values in target chunks where hallucinations occur, guiding the model to favor corrected values. In the human preference learning stage, a reward model $\hat{r}$ is trained on target chunks, enabling the model to output hallucination-free chunks and reducing the probability of generating hallucinated chunks. This reward model can assign rewards across various chunks, reflecting the presence of hallucinations as human preference. Once the reward model is established, it provides feedback in an additional fine-tuning stage, guiding a policy model $\pi_\theta$ based on human preferences regarding the presence or absence of hallucinations.



\subsection{Target optimization} 
\label{sec:target learning}
In this section, we first reformulate the RLHF \cite{rlhf} objective (constrained reward maximization problem) in terms of the target. The objective function of DPO in Eq. \ref{eq:1} operates at the sentence level. In contrast, we propose a new objective function designed to work on alternative target generation in the context of image goals:

\begin{equation}
\begin{aligned}
\max_{\pi_{\theta}} \mathbb{E}_{x \sim D, y^t \sim \pi_{\theta}(\cdot \mid x)} [ &r(x, y^t) \\ - \beta D_{\text{KL}} ( &\pi_{\theta}(\cdot \mid x) | \pi_{\text{ref}}(\cdot \mid x) )  ],
\label{eq:1}
\end{aligned}
\end{equation}
where $y^t$ represents the subset of targets corresponding to the full target set $T$. This objective function is designed to enable the model to learn more effectively toward specified targets in image generation, while regulating the KL divergence with the reference model’s policy. This balance enhances the model’s stability and maintains the quality of generated outputs.

In this work, we design the MLLMs to enable preference learning by comparing chunks within the response where hallucinations occur and the localized objects that cause them, to correct hallucinations. For this, we set the assumption that human preference differences regarding hallucinations occur only in signals related to hallucinations, with no differences in unrelated signals like below:

\begin{assumption}\label{assum}
For hallucination-related response $y$, the reward for response satisfies $\sum r_t = \sum r$, where $r_t$ represents the reward funtion for the target in response $y^t$.
\end{assumption}

Following, in the preference learning problem dealing with hallucinations, we can see that the value of the Bradley-Terry model remains the same when unrelated signals are removed:

\begin{lemma}
Given a reward function $r(x, y)$, assuming Assumption \ref{assum}, we can establish the equivalence between the Bradley-Terry model to the \ref{eq:BT}.
\label{lemma}
\end{lemma}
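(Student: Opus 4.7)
The plan is to mirror the standard DPO derivation, but now restricted to targets, and then invoke Assumption~\ref{assum} as the bridge that reconciles target-level preferences with response-level ones. Once unpacked, the lemma asserts that the Bradley--Terry probability formed from full responses $y_1,y_2$ agrees with the one formed from their target subsets $y_1^t,y_2^t$, so the reward in either version admits the representation of Eq.~\ref{eq:BT}.

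First, I would solve the target-level KL-constrained reward maximization posed in Section~\ref{sec:target learning} by the standard Lagrangian argument: differentiating in $\pi_\theta$ and setting the first-order condition to zero yields a Gibbs-form optimal policy $\pi_\theta^*(y^t\mid x) \propto \pi_{\mathrm{ref}}(y^t\mid x)\exp(r(x,y^t)/\beta)$, which after taking logs and rearranging gives the target analogue of Eq.~\ref{eq:BT}, namely $r(x,y^t) = \beta\log\frac{\pi_\theta(y^t\mid x)}{\pi_{\mathrm{ref}}(y^t\mid x)} + \beta\log Z(x)$. Second, I would apply Assumption~\ref{assum} to identify $r(x,y)$ with $r(x,y^t)$ for hallucination-related responses, so that signals outside the target contribute equally to the two sides of a Bradley--Terry comparison and cancel between numerator and denominator of $P_{\mathrm{BT}}(y_1\succ y_2\mid x)$. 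Composing this cancellation with the target-level reward representation just derived then reproduces Eq.~\ref{eq:BT} with $y$ replaced by $y^t$ throughout, which is the stated equivalence.

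The main obstacle I expect is not algebraic but interpretive: the notation $\sum r_t = \sum r$ in Assumption~\ref{assum} does not specify the index of summation, nor whether the equality is to be read pointwise in $(x,y)$ or only in expectation. A proof must commit to one reading -- most naturally, that per-chunk rewards decompose additively and that non-target chunks contribute identically to $r(x,y_w)$ and $r(x,y_l)$, so that those terms vanish in $u(x,y_w,y_l)$ of Eq.~\ref{eq:u}. Making this cancellation explicit is the substantive step; the remainder is routine substitution into the Bradley--Terry formula and into the KL-optimal policy.
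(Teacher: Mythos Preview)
Your argument is correct, but it does considerably more than the paper's proof and packages the lemma differently. The paper's proof is a three-line substitution: it writes the full-response Bradley--Terry probability $P(y_r\succ y_h\mid x)=\sigma\bigl(r(x,y_r)-r(x,y_h)\bigr)$, reads Assumption~\ref{assum} as the pointwise identity $r(x,y)=r_t(x,y^t)$ for each hallucination-related response, and substitutes to obtain $\sigma\bigl(r_t(x,y_r^t)-r_t(x,y_h^t)\bigr)$, which it then identifies with the target-level Bradley--Terry probability. No policy, no KL optimization, no partition function appears; the lemma as the paper proves it is purely a statement about the preference comparison, and the reference to Eq.~\ref{eq:BT} in the statement is not actually cashed out in their proof.

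Your first step---solving the target-level KL-constrained problem to recover a target analogue of Eq.~\ref{eq:BT}---is therefore extra structure the paper does not invoke here (it belongs, if anywhere, to the proof of Theorem~\ref{equi} and the subsequent DPO derivation). What your route buys is a more literal engagement with the wording of the lemma, since you actually connect the Bradley--Terry comparison back to the reward representation of Eq.~\ref{eq:BT}; the paper's route buys brevity. Your second step, the cancellation of non-target contributions in $u(x,y_w,y_l)$, is exactly the paper's entire argument, and your remark about the ambiguity of $\sum r_t=\sum r$ is well taken: the paper silently adopts the strong pointwise reading $r(x,y)=r(x,y^t)$ without further comment.
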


Lemma \ref{lemma} ensures that, under Assumption \ref{assum}, the preference reward for a response remains the same even when non-target tokens are excluded, preserving the overall reward  $r(x, y)$. Here, $r(x, y)$  represents the overall reward for the response  $y$  given the prompt  $x$. Therefore, when considering text generation as a target-level generation problem,  $r(x, y^t)$  can be viewed as the reward for the generated text aligned with the target, rather than the entire sentence.

Using Lemma \ref{lemma}, we can limit the calculation scope to the target itself. This strategy can also be applied in DPO by restricting the comparison of text generations to the target. This concept is summarized in the following theorem.
\begin{restatable}[]{theorem}{theoremonetext}\label{equi}
Under Assumption \ref{assum}, preference learning remains equivalent when non-target tokens are excluded. 
\end{restatable}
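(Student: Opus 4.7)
The plan is to push the Bradley--Terry equivalence of Lemma~\ref{lemma} through the DPO derivation: since the preference probability is unchanged when non-target tokens are dropped, the resulting negative log-likelihood loss (and its gradient) must also be unchanged, which is exactly the statement of the theorem.

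First I would write down the two candidate DPO objectives in parallel: the original $\mathcal L_{\text{DPO}}(\pi_\theta;\pi_{\text{ref}})$ derived from the full-response RLHF objective (Eq.~\ref{eq:1}, sentence-level), and a target-restricted version $\mathcal L_{\text{DPO}}^{\,t}(\pi_\theta;\pi_{\text{ref}})$ derived from the target-level objective (Eq.~3.1). Both are obtained by the same three-step recipe: (i) solve the KL-regularized reward maximization to get a reward--policy identity of the form $r = \beta\log(\pi_\theta/\pi_{\text{ref}}) + \beta\log Z$; (ii) plug this into the Bradley--Terry preference model; (iii) take $-\log\sigma$ of the resulting $u = r(x,y_w)-r(x,y_l)$. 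The $\beta\log Z(x)$ partition terms cancel in $u$ in both derivations, so the loss reduces to a log-ratio difference between $y_w$ and $y_l$ in the original case, and between $y_w^t$ and $y_l^t$ in the target case.

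Second, I would invoke Assumption~\ref{assum} via Lemma~\ref{lemma} to equate the two Bradley--Terry probabilities: under $\sum r_t = \sum r$, we have $r(x,y_w)-r(x,y_l) = r(x,y_w^t)-r(x,y_l^t)$, so $u(x,y_w,y_l) = u(x,y_w^t,y_l^t)$. Applying $-\log\sigma$ and taking expectations over $(x,y_w,y_l)\sim D$ then yields $\mathcal L_{\text{DPO}} = \mathcal L_{\text{DPO}}^{\,t}$. Differentiating in $\theta$ gives the same equality at the gradient level, so any parameter update produced by preference learning on the full response coincides with the update produced by preference learning restricted to the target tokens.

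The main obstacle is the policy-side bookkeeping: the DPO identity expresses $r$ as $\beta\log[\pi_\theta(y\mid x)/\pi_{\text{ref}}(y\mid x)]$, so to get the target-only loss I must show that the contributions of non-target tokens to this log-ratio cancel between $y_w$ and $y_l$. The cleanest way to handle this is to read Assumption~\ref{assum} at the token/chunk level, so that the token-wise reward decomposition $r(x,y)=\sum_i r_i$ has $r_i=0$ off-target and matches $r(x,y^t)=\sum_{i\in T} r_i$ on-target; combined with the KL being a sum over token positions, the optimal-policy log-ratios off the target cancel in $u$ under the shared reference, leaving only target-token log-ratios. Once this cancellation is made explicit, the equivalence of $\mathcal L_{\text{DPO}}$ and $\mathcal L_{\text{DPO}}^{\,t}$ follows directly, completing the proof.
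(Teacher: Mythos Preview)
Your proposal is correct and lands on the same conclusion as the paper, but by a slightly different route. The paper argues first at the RLHF level: from Assumption~\ref{assum} it asserts $r(x,y)=r(x,y^t)$, hence $\mathbb{E}_{y\sim\pi_\theta}[r(x,y)]=\mathbb{E}_{y^t\sim\pi_\theta}[r(x,y^t)]$, then observes that the KL-regularized policy-gradient updates therefore coincide, and only at the end writes down the target-restricted DPO objective as a consequence. You instead stay entirely inside the DPO/Bradley--Terry pipeline: invoke Lemma~\ref{lemma} to obtain $u(x,y_w,y_l)=u(x,y_w^t,y_l^t)$, apply $-\log\sigma$ and expectation to get $\mathcal L_{\text{DPO}}=\mathcal L_{\text{DPO}}^{\,t}$, and differentiate. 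Your route is more direct for the DPO form of the statement, and your explicit discussion of the policy-side bookkeeping---the token-level reward decomposition that makes the non-target log-ratio contributions vanish in $u$---makes precise a step that the paper's proof leaves implicit when it jumps from the reward-level identity to the $\pi_\theta$-parameterized DPO loss.
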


That is, the reward function and policy updates for target-restricted DPO align with those of the original DPO. Formally, let $\pi_\theta$ denote the policy in DPO and  $r(x, y)$  the reward function. When Assumption \ref{assum} holds, for any prompt  $x$  and response  $y$:

\begin{equation}
\begin{aligned}
\mathbb{E}_{y \sim \pi_\theta} \left[ r(x, y) \right] = \mathbb{E}_{y^t \sim \pi_\theta} \left[ r(x, y^t) \right],
\end{aligned}
\end{equation}

Thus, the policy gradient update for target-restricted DPO follows the same form as for the original DPO:

\begin{equation}
\begin{aligned}
\nabla_\theta \mathbb{E}_{y^t \sim \pi_\theta} \left[ r(x, y^t) - \beta D_{\text{KL}} (\pi_\theta | \pi_\text{ref}) \right]
\end{aligned}
\end{equation}

The proof of theorem \ref{equi} is provided in the Appendix B.
Theorem \ref{hf} provides a structured approach for modeling human preference probability within a KL-constrained setting by using the Bradley-Terry model. In this framework, the optimal policy  $\pi^{}_{\theta}$  is designed to align closely with human preferences by directly modeling the relative preference between two target outputs,  $y_h^t$  and  $y_r^t$ , conditioned on the prompt  $x$ .

\subsection{Efficiency in Target Learning} 
\label{sec:bt-model}
In \ref{sec:target learning}, we were able to modify the existing Bradley-Terry model to the target level, transforming the preference objective (eq.\ref{eq:1}) into a target-level problem. 	Intuitively, as shown in \ref{fig:overview}, focusing on correcting the target rather than the entire sentence is more efficient. This approach enables the model to learn the relationship between the specific region in the image where the fact is located and the fact itself, rather than the relationship between the entire image and the fact. This section emphasizes that the proposed target learning approach is theoretically more efficient than traditional preference learning methods that focus on the overall output. Traditional preference learning can indeed enhance the general quality of model outputs; however, it also tends to include elements outside of the designated target. This inefficiency arises as non-target elements in the image are often learned, resulting in a need for more extensive data during the learning process.

\newtheorem{prop}{Proposition}
\begin{prop} \textbf{Efficiency comparison in target learning} 
Let $\mathcal{H}_{pl}$ and $\mathcal{H}_{tl}$ be the hypothesis spaces associated with conventional preference learning (pl) and target-focused preference learning (tl), respectively. Let $m_{pl}$ and $m_{tl}$ be the number of samples required by each respective method to achieve a generalization error $\epsilon$ with confidence level $1 - \delta$. Then, $m_{tl} < m_{pl}$ holds.
\end{prop}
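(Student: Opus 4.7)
The plan is to invoke a standard PAC-style sample complexity bound and then show that the hypothesis class used by target-focused preference learning is strictly smaller (in capacity) than the one used by conventional preference learning. First I would formalize the two classes: every $h \in \mathcal{H}_{pl}$ is a reward/preference scorer over full prompt--response pairs $(x,y)$, while every $h \in \mathcal{H}_{tl}$ is a scorer over the target-restricted pair $(x, y^t)$. By Assumption~\ref{assum}, the preference signal lives entirely on the target, so any $h \in \mathcal{H}_{tl}$ lifts canonically to an element of $\mathcal{H}_{pl}$ that is constant on non-target coordinates; this gives an injection $\mathcal{H}_{tl} \hookrightarrow \mathcal{H}_{pl}$. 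The inclusion is strict because $\mathcal{H}_{pl}$ also contains scorers whose output genuinely depends on non-target tokens, which have no counterpart in $\mathcal{H}_{tl}$.

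Next I would apply the classical bound that, for a real-valued hypothesis class $\mathcal{H}$, the sample complexity needed to attain generalization error at most $\epsilon$ with confidence $1-\delta$ satisfies
\[
m(\mathcal{H}, \epsilon, \delta) \;=\; O\!\left(\frac{d(\mathcal{H}) + \log(1/\delta)}{\epsilon^{2}}\right),
\]
where $d(\mathcal{H})$ denotes a suitable capacity measure (VC or pseudo-dimension, equivalently Rademacher complexity). Monotonicity of the capacity under inclusion then yields $d(\mathcal{H}_{tl}) \le d(\mathcal{H}_{pl})$, and the strict inclusion established above together with the existence of at least one non-target coordinate that $\mathcal{H}_{pl}$ can shatter gives strict separation $d(\mathcal{H}_{tl}) < d(\mathcal{H}_{pl})$. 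Substituting into the bound delivers $m_{tl} < m_{pl}$. Theorem~\ref{equi} is what licenses the move, because it guarantees that restricting to $y^t$ does not change the expected preference objective, so the smaller class is not handicapped in what it can represent of the true preference structure.

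The main obstacle is promoting the weak capacity inequality to a \emph{strict} one in a principled way rather than by appeal to intuition. I would handle this by exhibiting a concrete witness: since responses contain tokens outside the hallucination chunk, one can construct two response pairs that agree on $y^t$ but differ on non-target tokens; such a pair is indistinguishable in $\mathcal{H}_{tl}$ yet separable by some $h \in \mathcal{H}_{pl}$, contributing at least one additional shattered configuration and strictly increasing the VC/pseudo-dimension. An alternative route, which I would present as a parallel argument, is a Rademacher-complexity view: the non-target coordinates inject additional empirical fluctuation into $\mathcal{H}_{pl}$'s supremum, and by Assumption~\ref{assum} this fluctuation carries no preference signal, so it inflates $m_{pl}$ without tightening generalization. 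Either route closes the argument and gives $m_{tl} < m_{pl}$.
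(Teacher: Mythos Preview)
Your proposal is correct and follows essentially the same line as the paper's own proof: establish a strict inclusion $\mathcal{H}_{tl} \hookrightarrow \mathcal{H}_{pl}$, deduce a strict VC-dimension inequality, and conclude via monotonicity of the PAC sample-complexity bound. Your treatment is in fact more careful than the paper's---you write out the explicit PAC bound, supply a concrete shattering witness for the strict capacity gap, and offer the Rademacher route as a parallel argument---but the skeleton is identical.
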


By proving the above proposition in the Appendix B, it is shown that the target learning approach can achieve the same level of generalization performance with fewer samples compared to conventional holistic learning methods. This improvement in efficiency is expected to yield a more effective hallucination mitigation effect in real.

\begin{figure*}
\centering
\includegraphics[width=400pt]{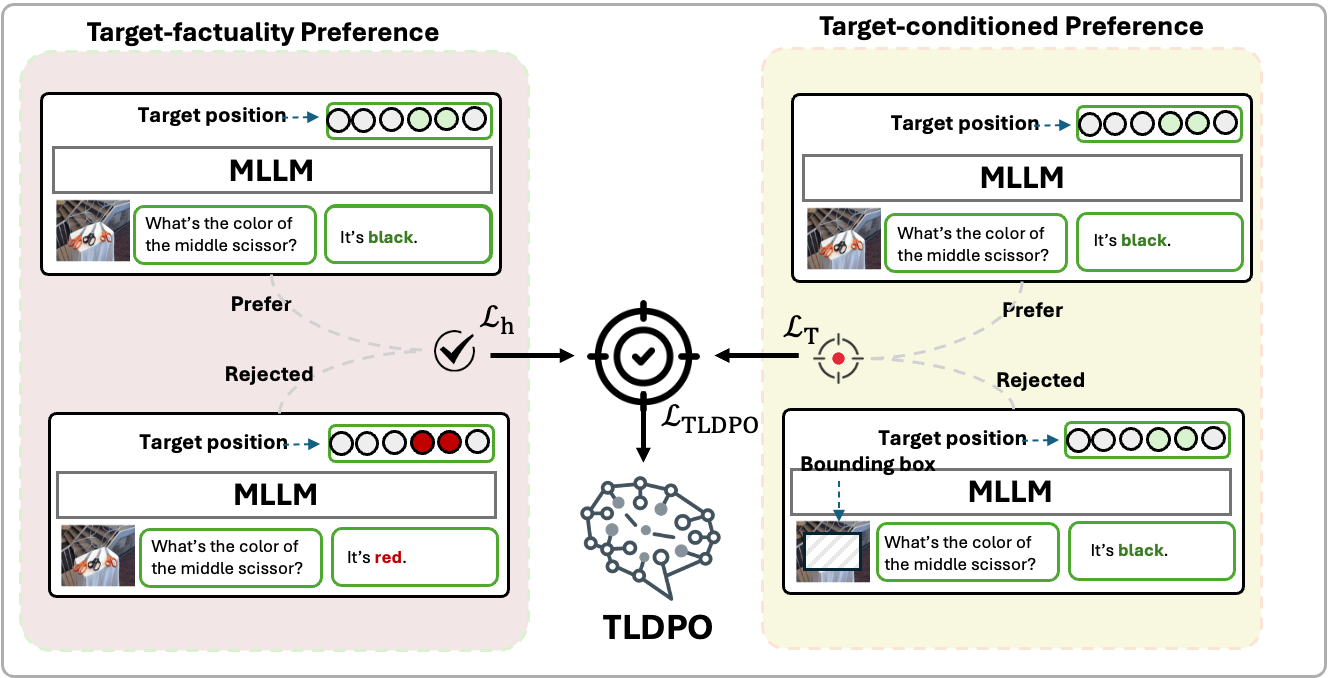}
\caption{Overview of TL-DPO. To correct hallucinations, TL-DPO (right) applies learning only to hallucinated target chunks, correcting only erroneous information and learning factual information. (left) It applies a masked condition to the target object where hallucination occurs utilizing bounding box information from the image dataset, training the model to extract accurate information from the desired part of the image. TL-DPO combines these two learning objectives to correct erroneous responses extracted from the desired image regions and transform them into correct responses.(Details in Appendix D)}
\label{fig:topic_clustering}
\end{figure*}

\section{Methods}

\label{sec:method}
To address the tendency of conventional Preference Learning to ignore images and learn from undesired signals, this study proposes a new learning approach in \ref{sec:TL-Dataset} and \ref{sec:TL-DPO} to mitigate these issues. The primary objective is to resolve the object hallucination problem in MLLMs through targeted learning. Considering the complexity of data construction and model training, we have chosen Direct Preference Optimization (DPO) over traditional RLHF (RL with Human Feedback) methods to restrict model preferences. DPO, a simpler and more efficient approach that does not require reinforcement learning, serves as our fundamental strategy. We extend this strategy to the multimodal domain to eliminate hallucinations and enhance the reliability and accuracy of multimodal model outputs.
We construct a TL dataset that includes incorrect responses, correct responses, and the positional information of targets with bounding boxes (Details in Appendix C). This dataset is subsequently used to train the \mymethod\  model. Models trained using this method show a substantial reduction in hallucinations in tasks involving detailed descriptions and dialogue.

\subsection{Generation of Target-Object-Inclusive Preference Datasets}
\label{sec:TL-Dataset}
In this study, we utilize the Visual Genome (VG) dataset as the main source to create hallucinated and corrected samples. For each given image and related question, we use a baseline model to generate an initial response, then identify instances where hallucinations occur to produce the hallucinated response  $y_h$  and its corrected version  $y_r$. Throughout this process, we record the positions of hallucinated elements in the original response and the corresponding corrected elements, allowing us to define hallucinated segments  $y_h^t$  and their corrected counterparts  $y_r^t$. Additionally, we incorporate bounding box information from the VG dataset to identify the specific objects within the image that contribute to the hallucination, resulting in the hallucination-related object information  $m_i^t$. Consequently, the structured dataset contains the hallucinated response  $y_h$  along with its hallucinated segment  $y_h^t$, the corrected response  $y_r$  with its corrected segment  $y_r^t$, the full image  $m_i$, and the specific hallucination-triggering object within the image  $m_i^t$. This comprehensive setup facilitates detailed analysis and targeted correction of hallucinations in generated responses. Detailed explanations are provided in Appendix C.

\subsection{Proposed Loss in \mymethod}
\label{sec:TL-DPO} 
\paragraph{Target Generation Loss}
\label{target loss}
When learning from hallucinations, the model may unintentionally learn irrelevant information, which we address by applying target loss. Given a pretrained MLLM that is prone to hallucinations, our goal is to minimize the likelihood of generating hallucinated responses by using pairs of erroneous and corrected responses, ${y_h^t, y_r^t}$, obtained through generative data augmentation. To this end, we define a preference optimization expression for erroneous and corrected response pairs.

For example, as shown in \ref{fig:overview}, suppose the erroneous response $y_h$  is "The clock reads approximately 11:20", and the corrected response  $y_r$  is "The clock reads approximately 15:26". Here,  $y_h^t$  represents the hallucinated information in  $y_h$ , and  $y_r^t$  denotes the corrected information in  $y_r$  that replaces  $y_h^t$  with the correct answer. In this example, the erroneous information  $y_h^t$  is "11:20", and the correct information  $y_r^t$  is "15:26".

Assuming that irrelevant information does not affect the Reward difference, the DPO formula\ref{eq:1} can be formulated as follows :

\begin{equation}
\mathcal L_{\text{t}} = -\mathbb{E}_{(x, y_r^t, y_h^t) \sim D} [\log \sigma (u(x, y_r^t, y_h^t))],
\label{eq:to}
\end{equation}
\begin{table*}[ht]
\centering
\begin{tabular}{|l|l|l|l|l|l|l|l|l|}

\hline
                  & \multicolumn{4}{l|}{Hallucination Benchmark}                                                       & \multicolumn{4}{l|}{Comprehensive Benchmark}                                                                                                           \\ \hline
Method            & \multicolumn{1}{l|}{\begin{tabular}[c]{@{}l@{}}CHAIR\_s\\ ($\downarrow$)\end{tabular}} & \multicolumn{1}{l|}{\begin{tabular}[c]{@{}l@{}}CHAIR\_i\\ ($\downarrow$)\end{tabular}} & \multicolumn{1}{l|}{\begin{tabular}[c]{@{}l@{}}POPE\\ ($\uparrow$)\end{tabular}}  & {\begin{tabular}[c]{@{}l@{}}MMHal\\ ($\uparrow$)\end{tabular}} & \multicolumn{1}{l|}{\begin{tabular}[c]{@{}l@{}}SciQA-\\ IMG\end{tabular}($\uparrow$)} & \multicolumn{1}{l|}{{\begin{tabular}[c]{@{}l@{}}MM-Vet\\ ($\uparrow$)\end{tabular}}} & \multicolumn{1}{l|}{{\begin{tabular}[c]{@{}l@{}}MMBench\\ ($\uparrow$)\end{tabular}}} & \begin{tabular}[c]{@{}l@{}}LLaVA-\\ Bench($\uparrow$)\end{tabular}  \\ \hline
LLaVA-1.5         & \multicolumn{1}{l|}{66.8}     & \multicolumn{1}{l|}{12.7}     & \multicolumn{1}{l|}{85.9}  & 2.42  & \multicolumn{1}{l|}{66.8}                                                 & \multicolumn{1}{l|}{30.5}   & \multicolumn{1}{l|}{63.0}    & 63.4          \\ \cline{2-9} 
+VLfeedback       & \multicolumn{1}{l|}{56.3}     & \multicolumn{1}{l|}{11.4}     & \multicolumn{1}{l|}{83.72} & 2.62  & \multicolumn{1}{l|}{66.2}                                                 & \multicolumn{1}{l|}{31.2}   & \multicolumn{1}{l|}{63.9}    & 62.1          \\ \cline{2-9} 
+Human-Preference & \multicolumn{1}{l|}{54.0}     & \multicolumn{1}{l|}{9.3}      & \multicolumn{1}{l|}{81.50} & 2.53  & \multicolumn{1}{l|}{65.8}                                                 & \multicolumn{1}{l|}{31.1}   & \multicolumn{1}{l|}{60.4}    & 63.7          \\ \cline{2-9} 
+RLHF-V           & \multicolumn{1}{l|}{44.6}     & \multicolumn{1}{l|}{7.9}      & \multicolumn{1}{l|}{86.20} & 2.59  & \multicolumn{1}{l|}{67.1}                                                 & \multicolumn{1}{l|}{30.9}   & \multicolumn{1}{l|}{63.6}    & 65.4  \\ \cline{2-9} 
+POVID & 35.2 & 8.3 & 86.9 & 2.08 & 67.5&30.5 & 63.2 &64.8 \\ \cline{2-9} 
+HA-DPO  & 37.2 & 10.0 & 86.9 & 1.97&70.3 &30.7 & 64.0 & 66.2\\ \cline{2-9} 
+HALVA & 46.6 & 23.1 & \textbf{87.0} & 2.25& \textbf{70.5} &32.1 & 66.1 &67.2 \\ \hline
\textbf{+ TL-DPO}     & \multicolumn{1}{l|}{\textbf{20.1}}     & \multicolumn{1}{l|}{\textbf{5.2}}      & \multicolumn{1}{l|}{86.95} & \textbf{2.72}  & \multicolumn{1}{l|}{70.3}                                                 & \multicolumn{1}{l|}{\textbf{32.2}}   & \multicolumn{1}{l|}{\textbf{67.8}}    & \textbf{71.2} \\ \hline
\end{tabular}
\caption{\textbf{Main results on the Hallucination Benchmark and Comprehensive Benchmark.} We report CHAIR-s, CHAIR-i, POPE, and MMHal scores under the Hallucination Benchmark, and SciQA-IMG, MM-Vet, MMBench, and LLaVA-Bench scores under the Comprehensive Benchmark. The best results for each metric in each group are presented in bold. Additionally, we provide additional results using various preference data and learning objectives for reference.}
\label{tab:general}
\vspace{-10pt}
\end{table*}
\begin{table*}[]
\centering
\begin{tabular}{|l|l|l|l|l|l|l|l|l|}
\hline
                  & \multicolumn{4}{l|}{Hallucination Benchmark}                                                                                 & \multicolumn{4}{l|}{Comprehensive Benchmark}                                                                                                                                                                 \\ \hline
Method            & \multicolumn{1}{l|}{\begin{tabular}[c]{@{}l@{}}CHAIR\_s\\ ($\downarrow$)\end{tabular}} & \multicolumn{1}{l|}{\begin{tabular}[c]{@{}l@{}}CHAIR\_i\\ ($\downarrow$)\end{tabular}} & \multicolumn{1}{l|}{\begin{tabular}[c]{@{}l@{}}POPE\\ ($\uparrow$)\end{tabular}}  & {\begin{tabular}[c]{@{}l@{}}MMHal\\ ($\uparrow$)\end{tabular}} & \multicolumn{1}{l|}{\begin{tabular}[c]{@{}l@{}}SciQA-\\ IMG\end{tabular}($\uparrow$)} & \multicolumn{1}{l|}{{\begin{tabular}[c]{@{}l@{}}MM-Vet\\ ($\uparrow$)\end{tabular}}} & \multicolumn{1}{l|}{{\begin{tabular}[c]{@{}l@{}}MMBench\\ ($\uparrow$)\end{tabular}}} & \begin{tabular}[c]{@{}l@{}}LLaVA-\\ Bench($\uparrow$)\end{tabular}  \\ \hline
LLaVA-1.5         & \multicolumn{1}{l|}{66.8}          & \multicolumn{1}{l|}{12.7}         & \multicolumn{1}{l|}{85.90}          & 2.42          & \multicolumn{1}{l|}{66.8}                                                 & \multicolumn{1}{l|}{30.5}          & \multicolumn{1}{l|}{63.0}          & 63.4                                                   \\ \cline{2-9} 
\textbf{+ TL-DPO} & \multicolumn{1}{l|}{\textbf{20.1}} & \multicolumn{1}{l|}{\textbf{5.2}} & \multicolumn{1}{l|}{\textbf{86.95}} & \textbf{2.72 }         & \multicolumn{1}{l|}{\textbf{70.3}}                                                 & \multicolumn{1}{l|}{\textbf{32.2}} & \multicolumn{1}{l|}{\textbf{67.8}} & \textbf{71.2}                                          \\ \hline
Qwen VL Chat      & \multicolumn{1}{l|}{48.2}          & \multicolumn{1}{l|}{9.1}          & \multicolumn{1}{l|}{87.07}          & 2.89          & \multicolumn{1}{l|}{\textbf{68.2}}                                                 & \multicolumn{1}{l|}{\textbf{41.2}}          & \multicolumn{1}{l|}{\textbf{60.6}}          &\textbf{67.6 }                                                  \\ \cline{2-9} 
\textbf{+ TL-DPO} & \multicolumn{1}{l|}{\textbf{31.4}} & \multicolumn{1}{l|}{\textbf{7.8}} & \multicolumn{1}{l|}{\textbf{85.28}} & \textbf{2.89} & \multicolumn{1}{l|}{64.8}                                        & \multicolumn{1}{l|}{38.5}          & \multicolumn{1}{l|}{59.7}          & 64.0                                                   \\ \hline
LLaVA-Next & \multicolumn{1}{l|}{29.08} & \multicolumn{1}{l|}{8.08} & \multicolumn{1}{l|}{84.8} & \multicolumn{1}{l|}{2.56} & 70.3 & \textbf{40.2 }& 63 & \textbf{72.7} \\\cline{2-9} 
\textbf{+ TL-DPO} & \textbf{25.1} & \textbf{6.72} & \textbf{87.1} & \textbf{3.12} & \textbf{72.1} & 40.1 & \textbf{63.1} & 72.6 \\
\hline
Llama3 & \textbf{5.5} & 10.1 & 82.8 & 3.19 & 83.9 & \textbf{57.6} & 85.8 & \textbf{65.8} \\\cline{2-9} 
\textbf{+ TL-DPO} & 7.1 & \textbf{9.71} & \textbf{87.1} & \textbf{3.86} &\textbf{87.2} & 53.8 & \textbf{87.3 }& 58.5 \\
\hline
InternVL-2.5(8B) & 18.4 & 8.7 & 86.5 & 3.3 & 96.3& 62.8& 68.6& 82.5 \\\cline{2-9} 
\textbf{+ TL-DPO} & \textbf{7.6} & \textbf{4.1} & \textbf{87.0} & \textbf{3.6} & \textbf{97.1 }& \textbf{68.1} & \textbf{80.0} & \textbf{87.2} \\
\hline
\end{tabular}
\caption{\textbf{Comparative results of different MLLMs with and without \mymethod. } Results are provided for baseline models (LLaVA-1.5, DocOwl, Qwen VL Chat, LLaVA-Next, Llama3, InternLV-2.5) both with and without TL-DPO for comparison, with the improved scores highlighted in bold.}
\label{tab:multi}
\vspace{-15pt}
\end{table*}

Here, $t$ denotes a target, $y_r^t$ is a target from the corrected response, and $y_h^t$ is a target from the hallucinated response. $T$ represents the set of targets and $u$ is defined in eq.\ref{eq:u}. In summary, while the standard DPO objective maximizes $\sigma (r(x, y_r) - r(x, y_h))$, \mymethod\, maximizes the target-specific reward $\sigma(r(x, y_r^t) - r(x, y_h^t))$, leveraging the fact that not all positions reflect the reward in \mymethod.

\paragraph{Target Condition Loss}
\label{sec:condition}
In this study, we propose target-conditional preference optimization to guide the model to utilize target object information. The key idea is to extend the approach proposed in mDPO\cite{mdpo} by leaving the target object in the image as a variable, allowing the model to determine preference labels based solely on target object information.
For example, as shown in \ref{fig:overview}, let's assume that the given image $m_i$ is of a clock tower, and $m_i^t$ represents the specific part of the image that caused the hallucination $y_h^t$. In $~m_i^t$, the object in the image that caused the hallucination is masked. In this example, $y_r^t$ is "11:20", $m_i^t$ is the clock face showing the time, and $\tilde m_i^t$ is the image where the clock face showing the time is masked.
Therefore, given preference data pairs $(m_{i}, q, y_r)$ and $(\tilde m_i^{t}, q, y_r)$, if the unmasked target $m_{t}$ is more appropriate for $q$ and $y_r$ than the masked target $\tilde m_i^{t}$, we reconstruct the loss function in equation \ref{eq:to} by adding the following conditions.

\begin{equation}
\begin{aligned}
\mathcal L_{\text{c}}= -\mathbb{E}_{(m_i,\tilde{m}_i^t,x, y_r) \sim D} [\log \sigma (u^*(m_i,\tilde{m}_i^t,x, y_r)],
\label{eq:co}
\end{aligned}
\end{equation}

With additional target conditions,  $u^*(m_i,\tilde{m}_i^t,x, y_r)$ is defined as an extension of \ref{eq:u} as follows:

\begin{equation}
u^*(m_i, \tilde{m}_i^t, x, y_r) = r(m_i, x, y_r) - r(\tilde{m}_i^t, x, y_r).
\label{eq:u*}
\end{equation}

The challenge here is to ensure that responses based on masked information from hallucination-inducing object parts are less preferred than responses that use all available information to provide a correct answer. To achieve this, our study applies masking using the bounding box of the hallucination-inducing object. This setup encourages the MLLM to utilize the object’s image in order to respond accurately.

The objective of \mymethod \ is a combination of the
target generation loss (eq.\ref{eq:to}) and target condition loss (eq.\ref{eq:co}):
\begin{equation}
\mathcal L_{TL-DPO} = \mathcal L_{t}+ \mathcal L_{c}
\end{equation}

\begin{table*}[]
\centering
\begin{tabular}{|l|llll|llll|}
\hline
                    & \multicolumn{4}{l|}{Hallucination Benchmark}                                                                                 & \multicolumn{4}{l|}{Comprehensive Benchmark}                                                                                                                                                                 \\ \hline
Method            & \multicolumn{1}{l|}{\begin{tabular}[c]{@{}l@{}}CHAIR\_s\\ ($\downarrow$)\end{tabular}} & \multicolumn{1}{l|}{\begin{tabular}[c]{@{}l@{}}CHAIR\_i\\ ($\downarrow$)\end{tabular}} & \multicolumn{1}{l|}{\begin{tabular}[c]{@{}l@{}}POPE\\ ($\uparrow$)\end{tabular}}  & {\begin{tabular}[c]{@{}l@{}}MMHal\\ ($\uparrow$)\end{tabular}} & \multicolumn{1}{l|}{\begin{tabular}[c]{@{}l@{}}SciQA-\\ IMG\end{tabular}($\uparrow$)} & \multicolumn{1}{l|}{{\begin{tabular}[c]{@{}l@{}}MM-Vet\\ ($\uparrow$)\end{tabular}}} & \multicolumn{1}{l|}{{\begin{tabular}[c]{@{}l@{}}MMBench\\ ($\uparrow$)\end{tabular}}} & \begin{tabular}[c]{@{}l@{}}LLaVA-\\ Bench($\uparrow$)\end{tabular}  \\ \hline
LLaVA-1.5           & \multicolumn{1}{l|}{66.8}          & \multicolumn{1}{l|}{12.7}         & \multicolumn{1}{l|}{85.90}          & 2.42          & \multicolumn{1}{l|}{66.8}                                                 & \multicolumn{1}{l|}{30.5}          & \multicolumn{1}{l|}{63.0}          & 63.4                                                   \\ \cline{2-9} 
+ target condition  & \multicolumn{1}{l|}{32.4}          & \multicolumn{1}{l|}{8.6}          & \multicolumn{1}{l|}{84.40}          & 2.58          & \multicolumn{1}{l|}{67.0}                                                 & \multicolumn{1}{l|}{31.3}          & \multicolumn{1}{l|}{61.2}          & 66.5                                                   \\ \cline{2-9} 
+ target generation & \multicolumn{1}{l|}{\textbf{14.6}} & \multicolumn{1}{l|}{6.1}          & \multicolumn{1}{l|}{\textbf{89.64}}          & 2.70          & \multicolumn{1}{l|}{68.2}                                                 & \multicolumn{1}{l|}{32.1}          & \multicolumn{1}{l|}{62.4}          & 68.7                                                   \\ \hline
TL-DPO (both)        & \multicolumn{1}{l|}{20.1}          & \multicolumn{1}{l|}{\textbf{5.2}} & \multicolumn{1}{l|}{86.95} & \textbf{2.72} & \multicolumn{1}{l|}{\textbf{70.3}}                                        & \multicolumn{1}{l|}{\textbf{32.2}} & \multicolumn{1}{l|}{\textbf{67.8}} & \textbf{71.2}                                          \\ \hline
\end{tabular}
\caption{\textbf{Ablation Results on \mymethod} \ with target generation or target condition removed. Both components contribute to correcting hallucinations, but the combined approach achieves the best overall performance (highlighted in bold).}
\label{tab:ablation}
\vspace{-15pt}
\end{table*}
\section{Experiment Results}
\label{experiment}

\subsection{Experiment Settings}
\subsubsection{Implementation Details}
We applied \mymethod, to LLaVA-v1.5-7B to validate the model’s performance. Model training was conducted with a batch size of 32 for 3 epochs, using a learning rate of 0.00001, a cosine learning rate scheduler, and a warm-up ratio of 0.1. The $\beta$ value for preference optimization was set to 0.1. Following previous research, we tuned the model using LoRA, setting the $\alpha$ value to 128 and the rank to 64. \mymethod, and the standard \mymethod, share the same configuration settings.

\subsubsection{Training Data}
Training Data: Based on the VG dataset (\cite{vg}), responses to given queries were generated using a baseline model. GPT-4 was then used to classify the responses as correct or incorrect, with the incorrect responses forming a hallucination dataset. Subsequently, incorrect responses were paired with the corrected answers from GPT-4 to create a preference dataset. Detailed generation methods are provided in the Appendix C.

\subsubsection{Evaluation Benchmark}
To validate performance, we used eight multimodal LLM benchmarks, focusing on hallucination issues. MMHalBench and Object HalBench assess response and object-level hallucinations, with Object HalBench measuring rates at both $CHAIR_s$ (response) and $CHAIR_i$ (object) levels. POPE evaluates preference alignment based on human feedback, while SciQA-IMG and MM-Vet assess scientific and veterinary-specific Q\&A accuracy, respectively. MMBench offers a general multimodal evaluation of factual accuracy, and LLaVA-Bench focuses on mitigating visual hallucinations in LLaVA models. Together, these benchmarks provide a comprehensive framework for evaluating MLLM performance in diverse multimodal contexts.

\begin{figure}
\centering
\includegraphics[width=0.75\linewidth]{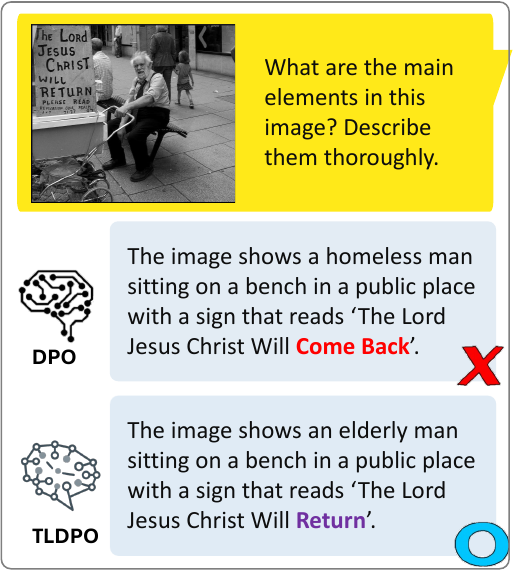}
\caption{\textbf{Qualitative Results from MMHalBench.}
An MLLM trained with standard DPO often rephrases image content into approximate meanings, leading to hallucinations resembling the correct answer (e.g., responding with “Come Back” instead of “Return” in OCR tasks). In contrast, an MLLM trained with TL-DPO gives accurate answers grounded in the actual objects in the image. Additional results are in the Appendix E.
}
\label{fig:enter-label}
\end{figure}

\subsection{Main results}
Table \ref{tab:general} summarizes the experimental results on two benchmark categories: the Hallucination Benchmark and the Comprehensive Benchmark. In the Hallucination Benchmark, our method, \mymethod, consistently achieves the lowest hallucination scores, showing significant reductions across key metrics. Specifically, \mymethod, records scores of 20.1 in $CHAIR_s$, 5.2 in $CHAIR_i$, and 2.72 in MMHal, significantly outperforming all other methods. These reductions indicate that \mymethod, effectively mitigates hallucinations related to target objects, addressing a major limitation of conventional MLLM-based models. By selectively masking or generating content centered on target object information to enhance factual consistency, \mymethod, demonstrates outstanding performance in effectively controlling hallucination-prone elements in model outputs.

Similarly, in the Comprehensive Benchmark, \mymethod, leads in overall scores, achieving 70.3 in SciQA-IMG, and the highest values in MM-Vet (32.2), MMBench (67.8), and LLaVA-Bench (71.2). These high scores confirm \mymethod’s ability to maintain balanced performance across a wide range of vision-language tasks while reducing hallucinations. By combining target learning and preference-aligned data, \mymethod, achieves these results without sacrificing performance on general metrics, demonstrating robustness and adaptability across different task settings. This contrasts with methods such as VLfeedback, Human-Preference, RLHF-V, and POVID, which struggle to achieve the same balance between accuracy and hallucination reduction. Additionally, results from HALVA and HA-DPO show further improvements in hallucination reduction but at the cost of comprehensive performance metrics.

\subsection{Generalization Across Models}
Table \ref{tab:multi} shows the performance improvements across both the Hallucination Benchmark and Comprehensive Benchmark for the LLaVA-1.5, Qwen VL Chat, LLaVA-Next, Llama3, and InternVL-2.5 models enhanced with \mymethod. For LLaVA-1.5, \mymethod, significantly reduces the $CHAIR_s$ score from 66.8 to 20.1 and the $CHAIR_i$ score from 12.7 to 5.2, while also improving metrics such as POPE and MMHal. Almost all baseline models also benefit from \mymethod, integration, showing substantial reductions in $CHAIR_s$ and $CHAIR_i$ scores and consistently improved or maintained performance on comprehensive metrics such as MM-Vet, SciQA-IMG, and MMBench. These results confirm that \mymethod, enhances robustness and generalization across a range of MLLM models, not only LLaVA, effectively reducing hallucinations while maintaining strong overall performance. However, while improvements are evident in the Hallucination Benchmark, certain metrics in the Comprehensive Benchmark show slight declines, suggesting room for further optimization in future work.

\subsubsection{Ablation study}

In this study, we propose a target learning approach that focuses on learning specific targets, adding a masking method to the target’s location on the image to enable the model to learn the target while utilizing information from the image. Table \ref{tab:ablation} presents the ablation study results for \mymethod \ 's two methods: target condition and target generation, both individually and combined. Applying only the target condition already shows substantial improvements in hallucination metrics (e.g., reducing $CHAIR_s$ from 66.8 to 32.4) while maintaining stable performance on comprehensive benchmarks. Applying target generation also enhances the results, achieving lower hallucination scores ($CHAIR_s$ 14.6) and higher comprehensive metrics (POPE 89.64). The combined \mymethod \ achieves the best performance across most benchmarks, demonstrating the effectiveness of integrating target condition and target generation for target-focused learning.

\section{Related Works}
\label{sec:related works}

\paragraph{Factuality and Hallucinations}
With recent advances in MLLMs, research on errors and hallucinations in model-generated sentences has been actively pursued. MLLMs, trained on large datasets, can produce linguistically coherent sentences across various topics, but they often present information that does not align with factual accuracy. Such hallucinations are considered particularly problematic in applications where factual accuracy is crucial, such as information retrieval, medicine, and law. Key causes of hallucinations include data bias in the training process, noise within the training data, and information gaps between pre-training and fine-tuning stages. As a result, various approaches have been proposed for factual verification in sentence generation. These approaches include evaluating model outputs through reliability scores and reducing hallucination potential by integrating with external knowledge bases.

\paragraph{Preference Alignment in MLLMs}
Aligning large models with human preferences has become a critical issue due to safety and ethical considerations in real-world applications. Preference alignment can be broadly categorized into two approaches: feedback-based alignment, which includes feedback generated by both humans and AI, and alignment through prompt guidance. Initial research on preference alignment for MLLMs has recently been conducted. Specifically, methods have been proposed to distill preferences to enhance relevance and accuracy based on MLLMs’ visual context. These methods collect human preferences to adjust model behavior by modifying hallucinated content at the segment level. While these initial results are promising, they heavily rely on traditional preference data generation processes for LLMs, where both preferred and non-preferred responses are generated, but neither is necessarily guaranteed to be accurate. In contrast, our proposed method directly corrects hallucination issues through a preference learning approach that leverages target information. This approach specifically targets hallucinations by utilizing information from the image objects that previously caused hallucinations, guiding the model to provide correct responses and effectively overcoming the limitations of existing methods.

\section{Conclusion}
\label{conclusion}
In conventional MLLM preference learning, a significant amount of information had to be acquired to correct hallucinations in image-related responses. To address this issue and mitigate object hallucinations in MLLMs, we propose \mymethod, a learning method that utilizes a dataset containing target object information. This approach uses generative data augmentation to selectively alter the correct answer phrases, generating pairs of hallucinated and correct responses along with target information for the target objects. These pairs are used to train MLLMs through the proposed \mymethod, enabling the model to accurately respond with information about the target objects and to effectively locate the correct target objects. Our extensive research demonstrates the effectiveness of \mymethod \ in mitigating various forms of object hallucinations, including hallucinations related to existence and attributes, visual illusions, and hallucinations caused by complex charts. Unlike traditional fine-tuning-based solutions, \mymethod \ effectively mitigates hallucinations across diverse vision-language tasks while maintaining or even enhancing performance in general vision-language tasks.
{
    \small
    \bibliographystyle{ieeenat_fullname}
    \bibliography{main}
}
\clearpage
\setcounter{page}{1}
\maketitlesupplementary

\section*{A. Side Effects of Preference Learning}
\label{sec: side effect} 
Here, we observe changes in the attention map to evaluate the side-effect of preference learning on mitigating hallucinations. Details in figure \ref{fig: supap}.

\section*{B. Proof of Theory} 

\label{sec:rationale}
\newtheorem{lem}{Lemma}
\begin{lem}\label{hf}
Given a reward function $r(x, y)$, assuming Assumption \ref{assum}, we can establish the equivalence between the Bradley-Terry model to the \ref{eq:BT}.
\end{lem}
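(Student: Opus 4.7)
The plan is to peel the response-level quantities in the Bradley-Terry (BT) expression down to their target-restricted counterparts by invoking Assumption \ref{assum} and then showing that the DPO closed-form parameterization in Eq.~\ref{eq:BT} is preserved under that restriction. The key observation is that Assumption \ref{assum} effectively forces the non-target contribution to the total reward to be a constant (or zero) for any two hallucination-related responses sharing the same non-target context, so it must cancel in any pairwise BT comparison.

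Concretely, I would proceed in three steps. First, I would expand $r(x,y)$ additively over the token (or chunk) partition induced by the target mask, writing $r(x,y) = \sum_{t \in T} r_t(x, y^t) + \sum_{t \notin T} r_t(x, y^{\bar t})$, and then use Assumption \ref{assum} to conclude that the second sum does not contribute to preference distinctions, so $r(x,y) = r(x,y^t) + C(x)$ for some quantity $C(x)$ that does not depend on the target content. Second, I would substitute this into the BT ratio
\begin{equation*}
P_{\text{BT}}(y_1 \succ y_2 \mid x) = \frac{\exp(r(x,y_1))}{\exp(r(x,y_1)) + \exp(r(x,y_2))}
\end{equation*}
and observe that $\exp(C(x))$ factors out of both numerator and denominator, yielding the target-level BT expression with $r(x, y^t)$ in place of $r(x,y)$. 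Third, I would substitute the DPO parameterization $r(x,y) = \beta \log \frac{\pi_\theta(y\mid x)}{\pi_\text{ref}(y\mid x)} + \beta \log Z(x)$ from Eq.~\ref{eq:BT} and verify that the $\beta \log Z(x)$ term plays the same role at the target level as it did at the response level, i.e. it cancels in the BT ratio, so the induced reward at the target level is simply $\beta \log \frac{\pi_\theta(y^t \mid x)}{\pi_\text{ref}(y^t \mid x)} + \beta \log Z(x)$, matching the form of Eq.~\ref{eq:BT}.

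The main obstacle I anticipate is making the decomposition in the first step fully rigorous. Assumption \ref{assum} is written at the level of summed per-token rewards ($\sum r_t = \sum r$), but the BT model uses a single scalar $r(x,y)$. I would need to justify treating the scalar response reward as the sum of per-position rewards and argue that the non-target positions either vanish or cancel across the compared responses (which is natural when the two responses share non-target context, as in the paired hallucinated/corrected chunks constructed in Section~\ref{sec:TL-Dataset}). A subtler point is verifying that the policy ratio $\pi_\theta(y\mid x)/\pi_\text{ref}(y\mid x)$ factorizes compatibly with the reward decomposition so that the DPO identity carries through term by term; this essentially reduces to the autoregressive chain rule combined with the additivity of the log-likelihood, which should give $\log \pi(y\mid x) = \log \pi(y^t\mid x) + \log \pi(y^{\bar t}\mid x, y^t)$, and then Assumption \ref{assum} is what makes the non-target piece irrelevant for preference distinctions.
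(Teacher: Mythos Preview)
Your proposal is correct and follows essentially the same route as the paper: invoke Assumption~\ref{assum} to reduce the full-response reward to its target-level counterpart, then observe that the Bradley--Terry comparison is unchanged. The paper is terser than you are --- it simply asserts $r(x,y_r)=r_t(x,y_r^t)$ and $r(x,y_h)=r_t(x,y_h^t)$ directly from $\sum r_t=\sum r$ and substitutes, whereas you introduce the intermediate decomposition $r(x,y)=r(x,y^t)+C(x)$ and cancel $\exp(C(x))$ in the BT ratio. That is a mild generalization (you allow a nonzero shared offset rather than exact equality) but structurally identical. One point of divergence: your third step, pushing the DPO closed form $\beta\log Z(x)$ and the policy-ratio factorization through to the target level, is \emph{not} part of the paper's proof of this lemma; the paper establishes only the BT-model equivalence here and defers the policy/objective-level argument to Theorem~\ref{equi}. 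So your worry about the autoregressive factorization is legitimate but belongs to the next result, and you can safely omit it from this proof.
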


\begin{proof}

We need to show that under Assumption 1, the traditional Bradley-Terry model applied to the full response is equivalent to Equation \ref{eq:BT} (our assumed equation) applied to the target chunks.
1. Traditional Bradley-Terry Model (Applied to Full Response)

$$P(y_r \succ y_h \mid x) = \sigma(r(x, y_r) - r(x, y_h))$$

Where, $y_r$ is the revised full response, $y_hl$ is the hallucinated full response, $r(x, y)$ is the reward for the full response $y$ given input $x$.

\paragraph{}\ 
We are given that $\sum r_t = \sum r$. In our case, this means:
$$r(x, y_r) = r_t(x, y_r^t), r(x, y_h) = r_t(x, y_h^t)$$

This is because Assumption \ref{assum} states that the reward difference is entirely contained within the target chunks.

\paragraph{} 
Substituting the values from step 2 into the traditional Bradley-Terry model, we get:
$$P(y_r \succ y_h \mid x) = \sigma(r_t(x, y_r^t) - r_t(x, y_h^t))$$

\paragraph{}
This is exactly the same as Equation \ref{eq:BT} (our assumed equation):

$$P(y_r^t \succ y_h^t \mid x) = \sigma(r_t(x, y_r^t) - r_t(x, y_h^t))$$

\end{proof}

\newtheorem{theo}{Theorem}
\begin{theo}
Under Assumption \ref{assum}, preference learning remains equivalent when non-target tokens are excluded. 
\end{theo}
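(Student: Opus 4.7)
The plan is to reduce the theorem to a direct corollary of Lemma \ref{hf}, since that lemma already does the heavy lifting by transferring the reward equality inside the Bradley-Terry model. First I would note that the full-response DPO loss $\mathcal{L}_{\text{DPO}}$ is entirely determined by the scalar $u(x, y_w, y_l) = r(x, y_w) - r(x, y_l)$ through a sigmoid and a logarithm. Applying Assumption \ref{assum} gives $r(x, y_w) = \sum r_t(x, y_w^t)$ and $r(x, y_h) = \sum r_t(x, y_h^t)$, so $u(x, y_w, y_l) = u(x, y_w^t, y_l^t)$. Substituting this identity into $\mathcal{L}_{\text{DPO}}$ would immediately yield $\mathcal{L}_{\text{DPO}}(\pi_\theta;\pi_{\text{ref}}) = \mathcal{L}_{\text{DPO}}^t(\pi_\theta;\pi_{\text{ref}})$, which is the target-restricted loss.

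Next, for the expected-reward identity, I would proceed pointwise: Assumption \ref{assum} forces $r(x,y) = r_t(x,y^t)$ on every sample drawn from $\pi_\theta$, because irrelevant tokens contribute zero reward by hypothesis. Taking expectation over $y \sim \pi_\theta(\cdot\mid x)$ and marginalizing the non-target positions then gives $\mathbb{E}_{y \sim \pi_\theta}[r(x,y)] = \mathbb{E}_{y^t \sim \pi_\theta}[r(x,y^t)]$ without requiring any further assumption on the distribution. This piece is essentially a measure-theoretic rephrasing of Assumption \ref{assum}.

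For the policy gradient claim, I would differentiate the KL-constrained target objective and use the fact that $\nabla_\theta u(x, y_w, y_l) = \nabla_\theta u(x, y_w^t, y_l^t)$, which is the gradient-level analogue of the pointwise $u$-equality already established. Since the DPO gradient is $-\mathbb{E}[\sigma(-u)\nabla_\theta u]$, the restriction to target tokens leaves it unchanged. The main obstacle I expect is the $\beta D_{\text{KL}}(\pi_\theta\|\pi_{\text{ref}})$ term: unlike the reward, the KL divergence over full responses does not trivially collapse onto the KL over targets. To close this gap I would appeal to the token-level factorization of the KL divergence used in the DPO derivation (see Eq.~\ref{eq:1}) and argue that the contribution of non-target positions to the gradient vanishes because those positions carry no preference signal under Assumption \ref{assum}; equivalently, one treats the irrelevant positions as governed by $\pi_{\text{ref}}$ so their KL contribution is zero. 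Making this step rigorous is the delicate point of the proof, and I would state it explicitly as a sub-lemma rather than bury it inside the main argument.
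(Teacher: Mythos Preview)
Your proposal follows essentially the same route as the paper: both invoke Assumption~\ref{assum} to obtain the pointwise identity $r(x,y)=r(x,y^t)$, push it through the expectation to get $\mathbb{E}_{y\sim\pi_\theta}[r(x,y)]=\mathbb{E}_{y^t\sim\pi_\theta}[r(x,y^t)]$, and then substitute into the DPO loss (equivalently, into $u$) and its gradient. The paper's proof is in fact less scrupulous than yours on the KL term: it simply writes the target-restricted gradient as $\nabla_\theta\mathbb{E}_{y^t\sim\pi_\theta}\bigl[r(x,y^t)-\beta D_{\mathrm{KL}}(\pi_\theta\|\pi_{\mathrm{ref}})\bigr]$ with the \emph{full} KL left untouched, obtained by direct substitution of the reward equality into the original RLHF gradient expression, and never attempts to collapse the KL onto the target tokens. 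Your proposed sub-lemma (non-target positions contribute zero KL because they are governed by $\pi_{\mathrm{ref}}$) is therefore additional rigor beyond what the paper supplies, not a step required to match its argument; the paper's claim is only that the reward component restricts, and it treats the KL regularizer as a constant carried along unchanged.
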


\begin{proof} We need to show that: 
$$\mathbb{E}{y \sim \pi_\theta} [ r(x, y) ] = \mathbb{E}{y^t \sim \pi_\theta} [ r(x, y^t) ]$$

Left-hand side (Original RLHF): $\mathbb{E}{y \sim \pi_\theta} [ r(x, y) ]$ This represents the expected reward over all possible responses $y$ generated by the policy $\pi_\theta$ given the input $x$.

Right-hand side (Target-restricted RLHF): $\mathbb{E}{y^t \sim \pi_\theta} [ r(x, y^t) ]$ This represents the expected reward over all possible target chunks $y^t$ generated by the policy $\pi_\theta$ given the input $x$.

\paragraph{} 
Assumption \ref{assum} states that $\sum r_t = \sum r$. In the context of expected values, this implies:

$$r(x, y) = r(x, y^t)$$
for any response $y$ and its corresponding target chunks $y^t$. Therefore:
$$\mathbb{E}{y \sim \pi_\theta} [ r(x, y) ] = \mathbb{E}{y \sim \pi_\theta} [ r(x, y^t) ]$$

Since the reward function only considers the target chunks $y^t$ due to Assumption \ref{assum}, we can replace the expectation over all responses $y$ with the expectation over target chunks $y^t$ without changing the value:

$$\mathbb{E}{y \sim \pi_\theta} [ r(x, y^t) ] = \mathbb{E}{y^t \sim \pi_\theta} [ r(x, y^t) ]$$

Thus:
$$\mathbb{E}{y \sim \pi_\theta} [ r(x, y) ] = \mathbb{E}{y^t \sim \pi_\theta} [ r(x, y^t) ]$$

This proves the equivalence of the expected rewards.

\paragraph{}
We also need to show that the policy gradient update for target-restricted RLHF follows the same form as for the original RLHF.

Original RLHF Gradient Update:
$$\nabla_\theta \mathbb{E}_{y \sim \pi_\theta} \left[ r(x, y) - \beta D_{\text{KL}} (\pi_\theta | \pi_\text{ref}) \right]$$

Target-restricted RLHF Gradient Update:
$$\nabla_\theta \mathbb{E}_{y^t \sim \pi_\theta} \left[ r(x, y^t) - \beta D_{\text{KL}} (\pi_\theta | \pi_\text{ref}) \right]$$

Since we have already established that $\mathbb{E}{y \sim \pi_\theta} [ r(x, y) ] = \mathbb{E}{y^t \sim \pi_\theta} [ r(x, y^t) ]$, we can substitute this into the original RLHF gradient update:

\begin{equation}
\begin{aligned}
\nabla_\theta \mathbb{E}_{y \sim \pi_\theta} & [ r(x, y) - \beta D_{\text{KL}} (\pi_\theta | \pi_\text{ref}) ] = \\ & \nabla_\theta \mathbb{E}_{y^t \sim \pi_\theta} [ r(x, y^t) - \beta D_{\text{KL}} (\pi_\theta | \pi_\text{ref})]
\end{aligned}
\end{equation}

This shows that the policy gradient updates for the original RLHF and target-restricted RLHF are the same under Assumption \ref{assum}.

As previously proven, under Assumption \ref{assum}, since  $r(x, y) = r(x, y^t)$ , we can replace each response pair $(y_r, y_h)$ in the preference dataset  $\mathcal D$  with the corresponding target chunks $(y_r^t, y_h^t)$. Therefore, the objective function of the target-learning DPO is as follows:

\begin{equation}
\begin{aligned}
&L_{\text{TL-DPO}}(\theta) = \\ & -\mathbb{E}_{(x, y_r^t, y_h^t) \sim D} \Bigg [ \log \sigma \Bigg( \beta \log \frac{\pi_\theta(y_r^t|x)}{\pi_{\text{ref}}(y_r^t|x)} - \beta \log \frac{\pi_\theta(y_h^t|x)}{\pi_{\text{ref}}(y_h^t|x)} \Bigg) \Bigg ]
\end{aligned}
\end{equation}

In conclusion, we can see that the existing preference learning method including RLHF, DPO can be applied in the same way to target learning.

\end{proof}

\newtheorem{pro}{Proposition}
\begin{pro} \textbf{Efficiency comparison in target learning} 
Let  $\mathcal{H}_{pl}$  and  $\mathcal{H}_{tl}$  be the hypothesis spaces to learning methods without target($pl$) and with target($tl$), respectively. The number of samples required to achieve the same generalization error  $\epsilon$  and confidence level  $1 - \delta$  satisfies  $m_{tl} \le m_{pl}$. 
\end{pro}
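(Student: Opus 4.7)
The plan is to reduce the claim to a standard PAC-style sample-complexity comparison, so that the smaller effective hypothesis class induced by restricting attention to targets directly yields a smaller required sample size. First I would formalize each hypothesis class: $\mathcal{H}_{pl}$ consists of preference scorers indexed by policies $\pi_\theta$ that operate on full triples $(x, y_w, y_l)$ defined over the full token horizon and full image support, while $\mathcal{H}_{tl}$ consists of scorers that operate only on the target substructures $(x, y_w^t, y_l^t)$ and the target image region $m_i^t$. Since every target is a sub-sequence (or sub-region) of its full counterpart and lives over a strictly shorter token horizon and a smaller image support, the effective complexity measure (VC dimension in the binary preference setting, or $\log|\mathcal{H}|$ in the discretized finite case) of $\mathcal{H}_{tl}$ is bounded above by that of $\mathcal{H}_{pl}$ via the natural embedding that pads target policies with identity behavior on non-target positions.

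Next I would invoke the standard PAC sample-complexity bound: for any hypothesis class $\mathcal{H}$ with complexity measure $d(\mathcal{H})$, a learner achieves generalization error at most $\epsilon$ with confidence $1-\delta$ using
$$m(\mathcal{H}, \epsilon, \delta) \;=\; O\!\left(\frac{d(\mathcal{H}) + \log(1/\delta)}{\epsilon}\right)$$
samples in the realizable case (and an analogous $\epsilon^{-2}$ bound in the agnostic case). Because $d(\mathcal{H}_{tl}) \le d(\mathcal{H}_{pl})$ by the embedding argument above, substituting into this bound with the fixed target values of $\epsilon$ and $\delta$ yields $m_{tl} \le m_{pl}$ directly. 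Intuitively, conventional preference learning has to additionally discriminate over non-target tokens and non-target image regions, which enlarges the function class without providing any useful reward signal under Assumption \ref{assum}.

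The main obstacle will be justifying that the restriction to targets does not discard the Bayes-optimal preference predictor; without this, the smaller class $\mathcal{H}_{tl}$ could suffer larger approximation error, trading variance for bias and invalidating a clean comparison. This is exactly where Assumption \ref{assum} together with Lemma \ref{lemma} becomes essential: they guarantee that the full Bradley-Terry reward difference is carried entirely by the target chunks, so the optimal preference predictor already lies inside $\mathcal{H}_{tl}$, and the complexity reduction comes at zero cost in realizability. Finally, to upgrade the conclusion to the strict inequality $m_{tl} < m_{pl}$ stated in the main text, I would observe that each additional non-target coordinate strictly enlarges $\log|\mathcal{H}|$ whenever the token vocabulary (or image region dictionary) has size at least two, a standard counting argument that makes the embedding strict rather than merely injective.
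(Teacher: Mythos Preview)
Your proposal is correct and follows essentially the same route as the paper: both argue that restricting to target substructures yields a hypothesis class of strictly smaller VC dimension, then invoke the monotonicity of PAC sample complexity in the complexity measure to conclude $m_{tl} \le m_{pl}$ (with strict inequality under the mild non-triviality of non-target coordinates). Your version is in fact more careful than the paper's, since you explicitly address the potential approximation-error trade-off and resolve it via Assumption~\ref{assum} and Lemma~\ref{lemma}, whereas the paper leaves realizability implicit.
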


\begin{proof}

 Target learning restricts the problem by focusing on a subset $y^t$ of the full output $y$. The functions in $\mathcal{H}_{tl}$ only need to discriminate based on variations within $y^t$. In contrast, functions in $\mathcal{H}_{pl}$ must accommodate variations across the entire $y$. Since $y^t$ represents a smaller, more specific part of the output space compared to $y$, the class of functions needed to model preferences over $y^t$ ($\mathcal{H}_{tl}$) is inherently less complex than the class needed for $y$ ($\mathcal{H}_{pl}$). While any preference function in $\mathcal{H}_{tl}$ can be represented within $\mathcal{H}_{pl}$ (by ignoring non-target parts), $\mathcal{H}_{pl}$ must also contain functions sensitive to variations outside $y^t$, which are explicitly excluded from consideration in $\mathcal{H}_{tl}$. Therefore, the complexity of $\mathcal{H}_{tl}$ is strictly less than that of $\mathcal{H}_{pl}$:
    \[ \text{VCD}(\mathcal{H}_{tl}) < \text{VCD}(\mathcal{H}_{pl}) \]
    The strict inequality holds because $\mathcal{H}_{pl}$ needs the capacity to model potential preference influences from non-target parts, a capacity not required by or included in $\mathcal{H}_{tl}$.

 Since the required sample complexity $m$ increases monotonically with the VC dimension for fixed $\epsilon$ and $\delta$, and we have established that $\text{VCD}(\mathcal{H}_{tl}) < \text{VCD}(\mathcal{H}_{pl})$, it follows directly that:
    \[ m_{tl} < m_{pl} \]
    Thus, target-focused preference learning is theoretically more sample-efficient than conventional preference learning for achieving the same generalization guarantees regarding the target phenomena.

\end{proof}

\section*{C. Details about dataset construction} 
\label{sec: Dataset}
This section describes the processes used to construct the dataset and the prompts employed during these processes.
The dataset construction consists of five main steps:
\begin{itemize}
    \item Step 1. Extract images, question-answer pairs associated with the images, and the bounding boxes of objects mentioned in the questions from the Visual Genome dataset.

    \item Step 2. Use a baseline model to generate responses to the queries.

    \item Step 3. Compare the model’s responses with the answers and filter out only the hallucinated responses that provide incorrect answers.

    \item Step 4. Compare the images, questions, answers, and hallucinated responses to correct the hallucinated responses into accurate answers.

    \item Step 5. Compare the hallucinated responses with the revised responses, and retain the revised positions in both the hallucinated responses and revised responses as target positions.
\end{itemize}
Finally, the dataset we constructed includes images, questions, hallucinated responses, revised responses, target positions, and bounding boxes. In Steps 3 and 4, the ChatGPT model was employed to perform the following tasks:

\begin{itemize}
    \item Prompt 1. Identifying correctness and errors in the model’s responses (in Step 3)
    
    \item Prompt 2. Correcting the incorrect model responses (in Step 4) 
\end{itemize}
The prompts used in each step are listed in Table\ref{prompt}, providing a comprehensive view of the data generation framework. Examples are also included in the Figure \ref{fig:dataset}.

\begin{figure*}
    \centering
    \includegraphics[width=0.55\linewidth]{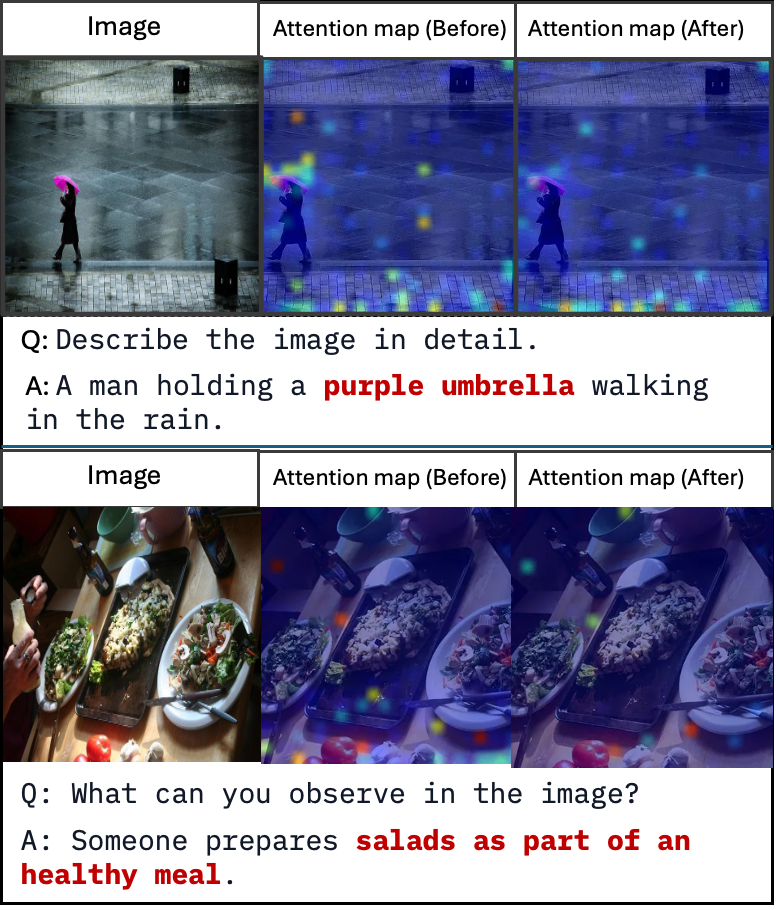}
    \caption{Changes in the Attention Map According to Preference Learning}
    \label{fig: supap}
\end{figure*}

\begin{table}[h]
    \centering
        \caption{Two types of prompts to GPT-4o (used in Step3, Step4)}
        \vspace{0.5em}
    \setlength{\tabcolsep}{10pt}
    \renewcommand{\arraystretch}{1.2}
    \begin{tabularx}{\textwidth}{X}
        \toprule
        \textbf{Identifying correctness in the model’s responses:} \\Help me evaluate the correctness of the model’s responses by comparing them to the dataset’s Question-Answer pair.\\
        *****************************************\\
            Question-answer Pair:\\
            Q: \{question\}\\
            A: \{answer\}\\
            Response:\\
            R: \{response\}\\
            Requirements:\\
            (1) Compare the Response to the Question with the Answer and determine its correctness.\\
            (2) Provide the result as either "true" or "false".\\
            *****************************************\\
            Output Format:\\
            Output: \{your answer\} \\
        \midrule
        \textbf{Prompts for revising hallucination tasks:} \\Requirements:\\
        (1) Modify parts of the given incorrect response to make it a correct response.\\
        (2) Compared to the original output, the modified response should be corrected based on the provided image and the correct answer.\\
        (3) Highlight the corrected parts by wrapping them with asterisks (e.g., corrected text).\\
        *****************************************
        Output Format:\\
          Revised answer: \{your answer\}\\
           
   \bottomrule
    \end{tabularx}
    \label{prompt}
\end{table}

\begin{figure*}
    \centering
    \includegraphics[width=0.7\linewidth]{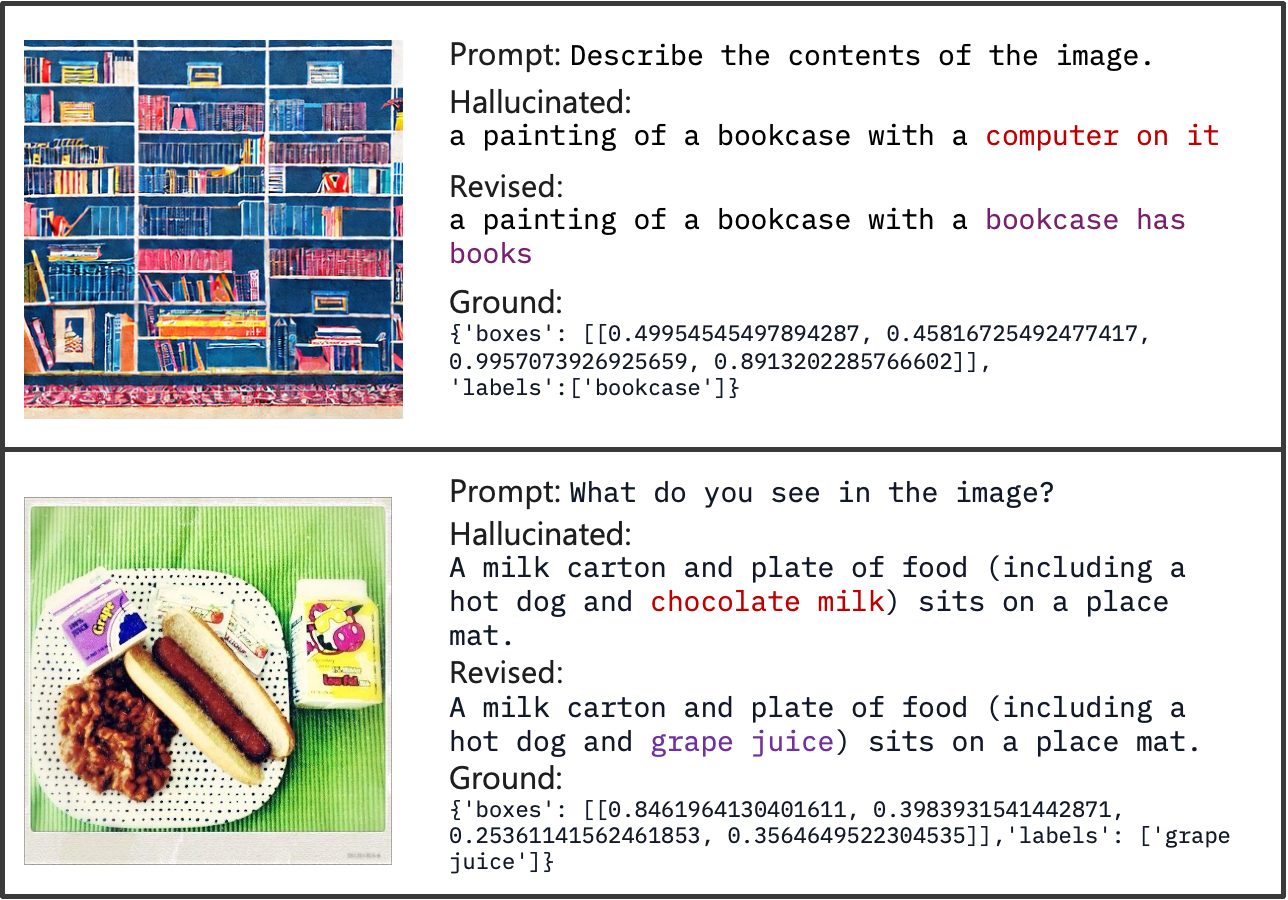}
    \caption{Examples of constructed datasets used in \mymethod}
    \label{fig:dataset}
\end{figure*}

\section*{D. Algorithms} 
The following (\ref{algorithm}) is the pseudocode for \mymethod.

\begin{algorithm*}[h]
\caption{\mymethod}\label{alg:mymethod}
\begin{algorithmic}[1]
\Require 
\begin{itemize}
    \item $\mathcal{D}$: Dataset composed of images $m$, text contexts $x$, target positions $t$, and bounding box $b$.
    \item $\pi_\theta$: Parameters of the multimodal language model (MLLM).
    \item $\pi_{\text{ref}}$: Parameters of the reference model.
    \item $\alpha$, $\beta_1$, $\beta_2$: Hyperparameters.
\end{itemize}

\State \textbf{Function Definitions:}
\begin{itemize}
    \item \Call{LabelToText}{$x$, $t$}:
        \begin{itemize}
            \item Labels non-target parts of the text context $x$ to ignore.
            \item Truncates the labels after the target position $t$.
            \item Returns the modified labels of text context $l_t(x)$.
        \end{itemize}
    \item \Call{TargetNoisyMasking}{$m$, $b$}:
        \begin{itemize}
            \item Applies a noisy mask to the image $m$ at the regions specified by bounding box $b$.
            \item Returns the modified image $\tilde m_t^r$.
        \end{itemize}
\end{itemize}

\State Generate \mymethod\ data and update $\mathcal{D}$ accordingly.
\State Initialize model parameters $\pi_\theta$.

\For{\textbf{each} epoch}
    \For{\textbf{each} $(m, x, t, b) \in \mathcal{D}$}
        \State $l_t(x) \gets$ \Call{LabelToText}{$m$, $x$, $k$, $t$}
        \Comment{Labeling and truncation for $\mathcal{L}_{\mymethod}$ calculation}
        \State $\tilde m_t \gets$ \Call{TargetNoisyMasking}{$m$, $b$}
        \Comment{Apply noisy masking to target object in image}

        \State \textbf{Compute} loss $\mathcal{L}_{\mymethod}$ using Equation (4.3)
        \State \textbf{Update} $\pi_\theta$ by minimizing $\mathcal{L}_{\mymethod}$
    \EndFor
\EndFor
\label{algorithm}
\end{algorithmic}
\end{algorithm*}

\section*{E. Qualitive Results} 
The following are the results of the qualitative analysis from the hallucination benchmark dataset in figure \ref{fig:qa}.

\begin{figure*}[b]
    \centering
    \includegraphics[width=0.8\linewidth]{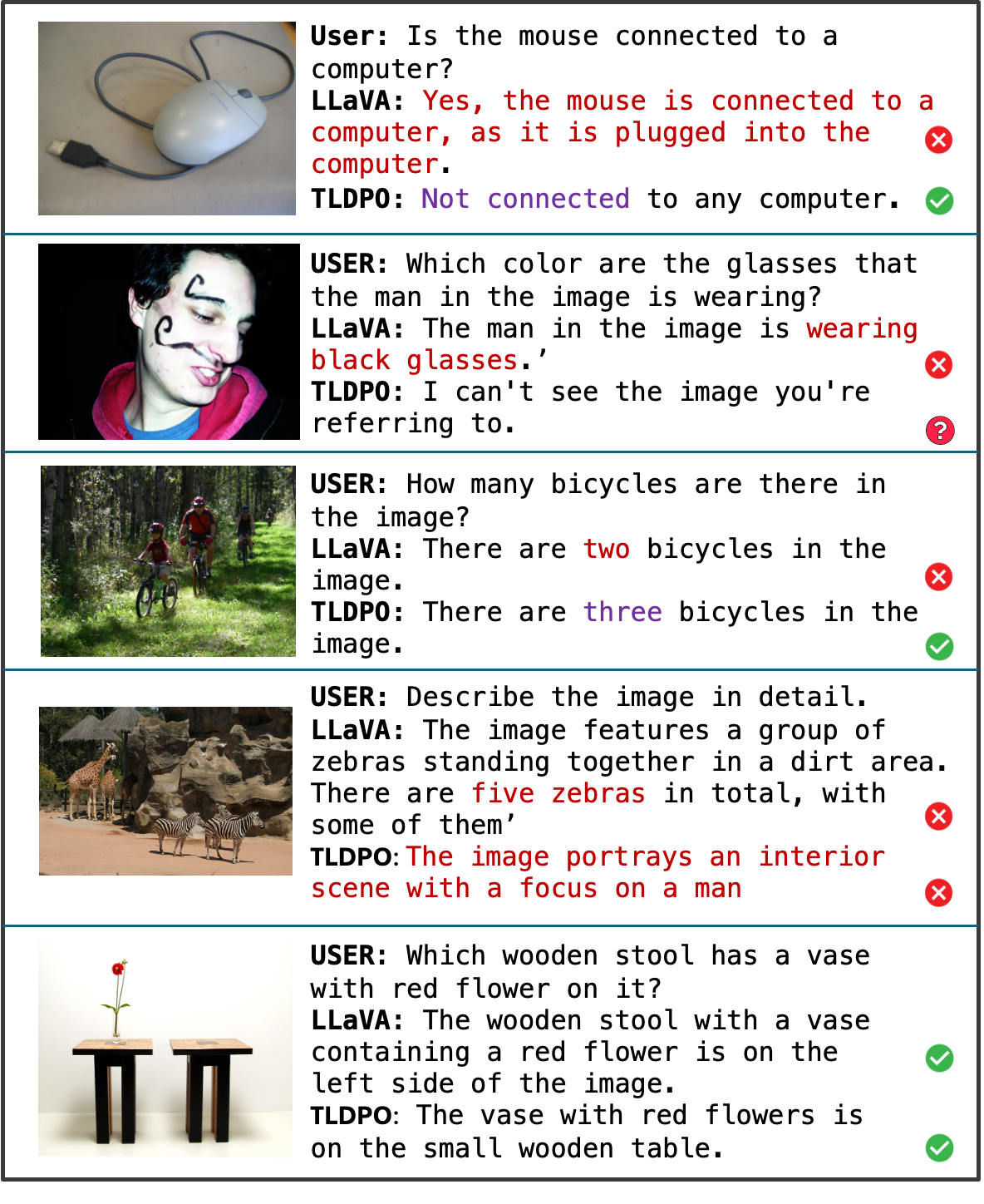}
    \caption{Qualitative Evaluation of Hallucination Correction Performance}
    \label{fig:qa}
\end{figure*}

\end{document}